\documentclass[11pt]{article}
\usepackage[margin=1in]{geometry}
\usepackage{amsmath,amssymb,amsthm,mathtools}
\usepackage[T1]{fontenc}
\usepackage{lmodern}
\usepackage{microtype}
\usepackage{xcolor}
\usepackage{hyperref}
\usepackage{enumitem}
\hypersetup{colorlinks=true,linkcolor=black,citecolor=blue,urlcolor=blue}

\newtheorem{proposition}{Proposition}
\newtheorem{remark}{Remark}

\title{On the ``Causality'' Step in Policy Gradient Derivations: \\
A Pedagogical Reconciliation of Full Return and Reward-to-Go}
\author{Nima H. Siboni \\
Juna.ai \\
Kastanienallee 32, 10435 Berlin \\
\href{mailto:nima@juna.ai}{nima@juna.ai}}
\date{}

\begin{document}
\maketitle

\begin{abstract}
In introductory presentations of policy gradients, one often derives the REINFORCE estimator using the full trajectory return and then states, by ``causality,'' that the full return may be replaced by the reward-to-go. Although this statement is correct, it is frequently presented at a level of rigor that leaves unclear where the past-reward terms disappear. This short paper isolates that step and gives a mathematically explicit derivation based on prefix trajectory distributions and the score-function identity. The resulting account does not change the estimator. Its contribution is conceptual: instead of presenting reward-to-go as a post hoc unbiased replacement for full return, it shows that reward-to-go arises directly once the objective is decomposed over prefix trajectories. In this formulation, the usual causality argument is recovered as a corollary of the derivation rather than as an additional heuristic principle.
\end{abstract}

\section{Introduction}
The policy-gradient method is commonly introduced through the log-derivative trick applied to the expected trajectory return. In pedagogical treatments, one first arrives at an estimator involving the \emph{full} return and then improves it by arguing that an action at time $t$ cannot influence rewards obtained before time $t$, so the estimator may instead use the \emph{reward-to-go}. This move appears, for example, in Berkeley's deep reinforcement learning lecture on policy gradients \cite{berkeley}. It is intuitively compelling and algorithmically important, since replacing full return by reward-to-go reduces variance without changing the expectation.

At the same time, the way this step is usually presented can leave a mathematically careful reader dissatisfied. The issue is not whether the conclusion is true, but rather how it is justified: at what line in the derivation do the terms involving rewards from times $< t$ vanish, and what precise probabilistic identity is being used?

The aim of this paper is modest. We do not present a new theorem or a new estimator. Instead, we clarify a difference in \emph{explanatory structure}. In the standard REINFORCE narrative, one proceeds schematically as
\[
\text{full return} \;\longrightarrow\; \text{causality argument} \;\longrightarrow\; \text{reward-to-go}.
\]
That is, reward-to-go appears as an unbiased substitution made \emph{after} the full-return estimator has already been written down \cite{williams1992,spinningup,berkeley}. By contrast, the derivation given here proceeds as
\[
\text{prefix trajectory decomposition} \;\longrightarrow\; \text{score expansion} \;\longrightarrow\; \text{reward-to-go},
\]
and only \emph{then} identifies the usual causality statement as the corresponding zero-expectation identity. The difference is therefore not algorithmic but conceptual: in the present account, reward-to-go is not introduced as a correction to a previously derived estimator, but as the quantity canonically paired with each score term once the objective is written at the level of partial trajectories.

This distinction also explains how the lecture-note style and the theorem-oriented literature fit together. In the former, one starts from a trajectory-level expression and must later explain why only future rewards remain attached to $\nabla_\theta \log \pi_\theta(a_t\mid s_t)$. In the latter, the policy gradient is written directly in terms of an action-value object, so the future-only dependence is already built in \cite{sutton1999}. Our main point is that these are not competing viewpoints. The ``causality'' step is simply a compressed way of referring to a precise probabilistic identity, and the prefix-trajectory formulation makes that identity visible from the beginning.

\section{Setup and notation}
Consider a finite-horizon Markov decision process with horizon $T$, initial distribution $p(s_1)$, transition kernel $p(s_{t+1}\mid s_t,a_t)$, and differentiable stochastic policy $\pi_\theta(a_t\mid s_t)$. Let the objective be
\begin{equation}
J(\theta)=\mathbb E_{\tau\sim p_\theta(\tau)}\!\left[\sum_{t=1}^T r(s_t,a_t)\right],
\label{eq:Jfull}
\end{equation}
where the full trajectory density is
\begin{equation}
p_\theta(\tau)=p(s_1)\prod_{t=1}^{T} \pi_\theta(a_t\mid s_t)\prod_{t=1}^{T-1} p(s_{t+1}\mid s_t,a_t).
\label{eq:fulltraj}
\end{equation}
For each $t$, define the prefix trajectory
\[
\tau_t=(s_1,a_1,\dots,s_t,a_t),
\]
with density
\begin{equation}
p_\theta(\tau_t)=p(s_1)\prod_{i=1}^{t} \pi_\theta(a_i\mid s_i)\prod_{i=1}^{t-1} p(s_{i+1}\mid s_i,a_i).
\label{eq:prefixdensity}
\end{equation}
Writing $r_t:=r(s_t,a_t)$, we may express the performance objective as
\begin{equation}
J(\theta)=\sum_{t=1}^T \int p_\theta(\tau_t)\,r_t\,d\tau_t.
\label{eq:Jprefix}
\end{equation}
This form emphasizes a simple but important fact: the reward incurred at time $t$ depends only on the prefix up to time $t$. It is precisely here that the present derivation departs from the standard narrative. Instead of first differentiating a single expectation involving the \emph{total} return and later discarding past rewards, we decompose the objective into time-indexed contributions from the outset. This is the structural move that will make reward-to-go appear without any separate subtraction argument.

For comparison, the corresponding expression in the common narrative, written at the same level as \eqref{eq:Jprefix}, would be
\begin{equation}
J(\theta)=\sum_{t=1}^T \int p_\theta(\tau)\,r_t\,d\tau.
\label{eq:commonJform}
\end{equation}
Equivalently,
\[
J(\theta)=\int p_\theta(\tau)\left(\sum_{t=1}^T r_t\right)d\tau.
\]
Thus, the precise mathematical difference is already visible at the level of the objective: \eqref{eq:Jprefix} writes each reward $r_t$ against the \emph{prefix} measure $p_\theta(\tau_t)$, whereas \eqref{eq:commonJform} writes each reward against the \emph{full} trajectory measure $p_\theta(\tau)$.

\section{Where the reward-to-go comes from}
Assume that differentiation may be interchanged with integration in \eqref{eq:Jprefix}. Then
\begin{align}
\nabla_\theta J(\theta)
&=\sum_{t=1}^T \int \nabla_\theta p_\theta(\tau_t)\,r_t\,d\tau_t \\
&=\sum_{t=1}^T \int p_\theta(\tau_t)\,\nabla_\theta \log p_\theta(\tau_t)\,r_t\,d\tau_t.
\label{eq:scoreprefix}
\end{align}
Up to this point, the manipulation is a standard application of the score-function identity. The divergence from the usual REINFORCE presentation lies in \emph{what} the identity is being applied to. In the standard route, one applies it to the full trajectory measure and obtains an estimator with the total return multiplying every score term. Here it is applied separately to each prefix measure $p_\theta(\tau_t)$, so each immediate reward $r_t$ is tied only to the prefix on which it actually depends.

Since the environment dynamics and initial-state distribution do not depend on $\theta$, the only $\theta$-dependence in \eqref{eq:prefixdensity} comes from the policy factors. Hence
\begin{equation}
\nabla_\theta \log p_\theta(\tau_t)=\sum_{i=1}^t \nabla_\theta \log \pi_\theta(a_i\mid s_i).
\label{eq:prefixscore}
\end{equation}
Substituting \eqref{eq:prefixscore} into \eqref{eq:scoreprefix} gives
\[
\nabla_\theta J(\theta)
=\sum_{t=1}^T \int p_\theta(\tau_t)\left(\sum_{i=1}^t \nabla_\theta \log \pi_\theta(a_i\mid s_i)\right)r_t\,d\tau_t.
\]
Now exchange the order of summation:
\begin{equation}
\nabla_\theta J(\theta)
=\sum_{j=1}^T\sum_{t=j}^T \int p_\theta(\tau_t)\,\nabla_\theta\log\pi_\theta(a_j\mid s_j)\,r_t\,d\tau_t.
\label{eq:swappedsums}
\end{equation}
This is the first line at which the conceptual difference becomes fully visible. In the standard derivation, one would already have the full return multiplying $\nabla_\theta \log \pi_\theta(a_j\mid s_j)$ and would now need to prove that the rewards from times $t<j$ can be deleted. Here no such deletion is required. The index restriction $t\geq j$ appears automatically because the score term at time $j$ is present only in prefixes of length at least $j$. In other words, reward-to-go emerges by \emph{construction}, not by later cancellation.

To make the consequence explicit, fix $j$. Using the factorization
\[
p_\theta(\tau_t)=p_\theta(\tau_j)\,p_\theta(\tau_{j+1:t}\mid \tau_j),
\]
we may integrate out the future variables beyond time $j$:
\begin{align*}
&\sum_{t=j}^T \int p_\theta(\tau_t)\,\nabla_\theta\log\pi_\theta(a_j\mid s_j)\,r_t\,d\tau_t \\
&\qquad = \int p_\theta(\tau_j)\,\nabla_\theta\log\pi_\theta(a_j\mid s_j)
\left(\sum_{t=j}^T \mathbb E[r_t\mid \tau_j]\right)d\tau_j.
\end{align*}
Again, the present route differs from the standard classroom story in emphasis. One is not starting from a term involving all rewards and then showing that earlier rewards are harmless. Rather, after regrouping by a fixed score term, one finds that only rewards indexed by $t\geq j$ are available to be paired with that score term in the first place. The object that remains is therefore the future return from time $j$ onward.

If one keeps the sampled future rewards rather than writing the conditional expectation explicitly, the quantity multiplying the score term is exactly the realized reward-to-go
\begin{equation}
R_j:=\sum_{t=j}^T r_t.
\label{eq:rtgdef}
\end{equation}
Thus one obtains the familiar reward-to-go form
\begin{equation}
\nabla_\theta J(\theta)=\sum_{j=1}^T \mathbb E\big[\nabla_\theta\log\pi_\theta(a_j\mid s_j)\,R_j\big].
\label{eq:rtgpg}
\end{equation}
The formula is standard; what is nonstandard in the present note is only the logical order by which it is reached.

\section{The zero-expectation identity behind ``causality''}
A second, equivalent way to explain the same step is to show directly why past rewards contribute nothing in expectation.

\begin{proposition}
For any $t<j$,
\begin{equation}
\mathbb E\big[\nabla_\theta\log\pi_\theta(a_j\mid s_j)\,r_t\big]=0.
\label{eq:zeroterm}
\end{equation}
\end{proposition}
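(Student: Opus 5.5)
The plan is to condition on the history up to the state $s_j$ and use the fact that the score function has zero mean under the policy. Let $h_j:=(s_1,a_1,\dots,s_{j-1},a_{j-1},s_j)$ denote the history up to and including $s_j$, but not $a_j$. Since $t<j$, the reward $r_t=r(s_t,a_t)$ is a function of $h_j$ alone. By the tower property,
\[
\mathbb E\big[\nabla_\theta\log\pi_\theta(a_j\mid s_j)\,r_t\big]
=\mathbb E\Big[r_t\,\mathbb E\big[\nabla_\theta\log\pi_\theta(a_j\mid s_j)\,\big|\,h_j\big]\Big].
\]
The expectation may be taken over the full trajectory or over the prefix $\tau_j$. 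In either case the variables after time $j$ integrate out, and only $\tau_j$ matters.

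Next, compute the inner conditional expectation. By the factorization \eqref{eq:prefixdensity}, the conditional law of $a_j$ given $h_j$ is $\pi_\theta(\cdot\mid s_j)$, because the density of $\tau_j$ is the density of $h_j$ times $\pi_\theta(a_j\mid s_j)$. Hence
\[
\mathbb E\big[\nabla_\theta\log\pi_\theta(a_j\mid s_j)\mid h_j\big]
=\int \pi_\theta(a\mid s_j)\,\nabla_\theta\log\pi_\theta(a\mid s_j)\,da
=\int \nabla_\theta\pi_\theta(a\mid s_j)\,da
=\nabla_\theta\int\pi_\theta(a\mid s_j)\,da=\nabla_\theta 1=0.
\]
Substituting this into the outer expectation gives \eqref{eq:zeroterm}.

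The only delicate step is the interchange of $\nabla_\theta$ and $\int da$ in the score identity. This is the same kind of regularity assumption the paper already makes in Section~3. I would state it explicitly, for instance as dominated convergence for $\nabla_\theta\pi_\theta(\cdot\mid s)$, or note that it is trivial for finite action spaces. Integrability of $r_t\,\nabla_\theta\log\pi_\theta$ is also needed to justify the tower property. Bounded rewards together with square-integrable scores suffice for this.

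Finally, I would remark on what the proposition gives. Summing \eqref{eq:zeroterm} over $t<j$ shows that the full-return form $\sum_j\mathbb E[\nabla_\theta\log\pi_\theta(a_j\mid s_j)\sum_t r_t]$ equals the reward-to-go form \eqref{eq:rtgpg}. This recovers the causality argument as a corollary.
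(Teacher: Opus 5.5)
Your proof is correct and follows the paper's route. Like the paper, you condition on the history up to $s_j$ (excluding $a_j$), pull out $r_t$ as measurable with respect to that history, and kill the inner conditional expectation with the score identity $\int \nabla_\theta \pi_\theta(a\mid s_j)\,da = \nabla_\theta 1 = 0$; your explicit regularity conditions (derivative--integral interchange, integrability for the tower property) are a useful addition the paper leaves implicit.
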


\begin{proof}
Condition on the history up to time $j$, excluding the random draw of $a_j$. The reward $r_t$ is then measurable with respect to the past and does not depend on $a_j$, so
\begin{align*}
\mathbb E\big[\nabla_\theta\log\pi_\theta(a_j\mid s_j)\,r_t\mid s_j,\text{past}\big]
&=r_t\int \pi_\theta(a_j\mid s_j)\,\nabla_\theta\log\pi_\theta(a_j\mid s_j)\,da_j \\
&=r_t\,\nabla_\theta\int \pi_\theta(a_j\mid s_j)\,da_j \\
&=r_t\,\nabla_\theta 1 \\
&=0.
\end{align*}
Taking expectations proves \eqref{eq:zeroterm}.
\end{proof}

This proposition is the precise identity hidden behind the sentence ``the current action cannot affect past rewards.'' It also explains how the present derivation relates to the standard one. If one begins with the full-return REINFORCE expression, the role of Proposition~1 is to justify \emph{removing} the past-reward terms. In the prefix-based derivation above, those terms never appear after regrouping. Thus the usual causality argument is not contradicted; it is recovered as the statement that the extra terms introduced by the full-return route would in fact have expectation zero.

\section{Relation to the policy gradient theorem}
The same conclusion appears in a more structural way in the policy gradient theorem. Sutton et al.\ show that the policy gradient can be written in terms of the action-value function $Q^{\pi}(s,a)$ rather than the full return \cite{sutton1999}. From that perspective, future dependence is present from the outset: $Q^{\pi}(s_t,a_t)$ is already the expected return from time $t$ onward. The reward-to-go estimator is then simply the Monte Carlo sample counterpart of that object.

This is why the theorem-oriented literature can seem cleaner on this point. It avoids the need to explicitly cancel past rewards because they have already been excluded by the definition of the value object being used. By contrast, lecture derivations that begin from the full return keep all rewards in view initially and must subsequently explain why only future rewards remain attached to each score term. The prefix-trajectory derivation developed here should be understood as a bridge between these two styles. It still begins from a trajectory-level decomposition, as lecture notes often do, but it organizes the calculation in a way that makes the future-only structure explicit before any appeal to causality.

\section{Pedagogical conclusion}
The lecture-note phrase ``by causality, use reward-to-go'' should therefore be read as an abbreviation, not as a separate heuristic principle. What it abbreviates is one of two equivalent arguments:
\begin{enumerate}[label=(\alph*)]
    \item a regrouping argument over prefix trajectory distributions, leading from \eqref{eq:scoreprefix} to \eqref{eq:rtgpg}; or
    \item a zero-expectation argument showing that every term coupling $\nabla_\theta\log\pi_\theta(a_j\mid s_j)$ with a reward $r_t$ for $t<j$ vanishes.
\end{enumerate}
Seen this way, the pedagogical derivation and the theorem-oriented derivation are fully reconciled. The standard REINFORCE narrative presents reward-to-go as an unbiased replacement for total return; the prefix-trajectory narrative shows that reward-to-go is already the quantity canonically associated with each score term once the objective is decomposed at the correct level of granularity.

\begin{remark}
From a teaching perspective, the compressed ``causality'' slogan is probably useful on first encounter, since it highlights the direction of influence in a trajectory. But for a reader seeking a fully rigorous derivation, it is worth making explicit that the real engine is the score-function identity together with the factorization of prefix trajectory measures. The pedagogical gain of the present reformulation is exactly that it makes clear \emph{where} the derivation diverges from the usual REINFORCE story and \emph{why} reward-to-go appears without an extra heuristic step.
\end{remark}

\end{document}